\newtheorem{theorem}{Theorem}
\newtheorem{definition}{Definition}
\title{}
\author{$^1$, Reuth Mirsky$^1$, \and Peter Stone$^{1,2}$
\\ $^1$ The University of Texas at Austin
\\ $^2$ Sony AI
\\ \{wmacke, reuth, pstone\}@cs.utexas.edu}
\title{Expected Value of Communication \\ for Planning in Ad Hoc Teamworks}
\author {
    William Macke,\textsuperscript{\rm 1}
    Reuth Mirsky, \textsuperscript{\rm 1}
    Peter Stone \textsuperscript{\rm 1,2} \\
}
\begin{document}
 
\maketitle

\begin{abstract}
\begin{quote}
    A desirable goal for autonomous agents is to be able to coordinate on the fly with previously unknown teammates.  Known as “ad hoc teamwork”, enabling such a capability has been receiving increasing attention in the research community. One of the central challenges in ad hoc teamwork is quickly recognizing the current plans of other agents and planning accordingly.  In this paper, we focus on the scenario in which teammates can communicate with one another, but only at a cost.  Thus, they must carefully balance plan recognition based on observations vs. that based on communication.
    
    This paper proposes a new metric for evaluating how similar are two policies that a teammate may be following - the Expected Divergence Point (\textsc{edp}).
    We then present a novel planning algorithm for ad hoc teamwork, determining which query to ask and planning accordingly. We demonstrate the effectiveness of this algorithm in a range of increasingly general communication in ad hoc teamwork problems.
    
        
\end{quote}
\end{abstract}

\section{Introduction}
Modern autonomous agents are often required to solve complex tasks in challenging settings, and to do so as part of a team. For example, service robots have been deployed in hospitals to assist medical teams in the recent pandemic outbreak. The coordination strategy of such robots cannot always be fixed a priori, as it may involve previously unmet teammates that can have a variety of behaviors. 
These robots will only be effective if they are able to work together with other teammates without the need for coordination strategies provided in advance~\cite{cakmak2012designing}.
This motivation is the basis for ad hoc teamwork, which is defined from the perspective of a single agent, the \emph{Ego Agent}\footnote{Also referred to as the \emph{Ad Hoc Agent}}, that needs to collaborate with \emph{teammates} without any pre-coordination~\cite{stone2010ad,albrecht2018autonomous}. Without pre-coordination, only very limited knowledge about any teammate, such as that they have limited rationality or what their potential goals are, is available.
An important capability of ad hoc teamwork is plan recognition of teammates, as their goals and plans can affect the goals and plans of the ego agent.
Inferring these goals is not trivial, as the teammates may not provide any information about their policies, and the execution of their plan might be ambiguous. Hence, it is up to the ego agent to disambiguate between potential goals.
The first contribution of this paper is a \textbf{metric to evaluate ambiguity between two potential teammate policies} by quantifying the number of steps a teammate will take (in expectation) until it executes an action that is consistent with only one of the two policies. 
We show how this \textsc{edp} metric can be computed in practice using a Bellman update.

In addition to applying a reasoning process to independently infer the goal of its teammate, the ego agent can also directly \textit{communicate} with that teammate to gain information faster than it would get by just observing. However, if such a communication channel is available, it can come with a cost, and the ego agent should appropriately decide when and what to communicate. 
For example, if the previously described medical robot can fetch different tools for a physician in a hospital, the physician would generally prefer to avoid the additional cognitive load of communicating with the robot, but may be willing to answer an occasional question so that it can be a better collaborator. We refer to this setting where the ego agent can leverage communication to collaborate with little to no knowledge about its teammate as \emph{Communication in Ad hoc Teamwork}, or \textsc{cat}~\cite{mirskypenny}.
The second contribution of this paper is using \textsc{edp} in a
\textbf{novel planning algorithm for ad hoc teamwork} that reasons about the value of querying and chooses when and what to query about in the presence of previously unmet teammates.

Lastly, this paper presents empirical results showing the performance of the new \textsc{edp}-based algorithm in these complex settings, showing that it outperforms existing heuristics in terms of total number of steps required for the team to reach its goal. Moreover, the additional computations required of the \textsc{edp}-based algorithm can mostly take place prior to the execution, and hence its online execution time does not differ significantly from simpler heuristics.


\section{Related Work}
There is a vast literature on reasoning about teammates with unknown goals~\cite{fern2007decision,albrecht2018autonomous} and on communication between artificial agents~\cite{cohen1997team,decker1987distributed,pynadath2002communicative}, but significantly less works discuss the intersection between the two, and almost no work in an ad hoc teamwork setting.
Goldman and Zilberstein \shortcite{goldman2004decentralized} formalized the problem of collaborative communicating agents as a decentralized POMDP with communication (DEC-POMDP-com). 
Communication in Ad-Hoc Teamwork (\textsc{cat}) is a related problem that shares some assumptions with DEC-POMDP-com: 
\begin{itemize}
    \item All teammates strive to be collaborative.
    \item The agents have a predefined communication protocol available that cannot be modified during execution.
    \item The policies of the ego agent's teammates are set and cannot be changed.  This assumption does not mean that agents cannot react to other agents and their actions, but rather that such reactions are consistent as determined by the set policy.
\end{itemize} 
However, these two problems make different assumptions that separate them. DEC-POMDP-com uses a single model jointly representing all plans, and this model is collaboratively controlled by multiple agents. \textsc{cat}, on the other hand, is set from the perspective of one agent with a limited knowledge about its teammates' policies (such as that all agents share the same goal and strive to be collaborative) and thus it cannot plan a priori how it might affect these teammates \cite{stone2013teaching,ravula2019ad}.

In recent years there have been some works that considered communication in ad hoc teamwork \cite{barrett2014communicating,chakraborty2017coordinated}. 
These works suggested algorithms for ad hoc teamwork, where teammates either share a common communication protocol, or can test the policies of the teammates on the fly (e.g. by probing). 
These works are situated in a very restrictive multi-agent setting, namely a multi-arm bandit, where each task consists of a single choice of which arm to pull. Another recent work on multi-agent sequential plans proposed an Inverse Reinforcement Learning technique to infer teammates' goals on the fly, but it assumes that no explicit communication is available \cite{wang2020too}.

With recent developments in deep learning, several works were proposed for a sub-area of multi-agent systems, where agents share information using learned communication protocols~\cite{hernandez2019survey,mordatch2018emergence,foerster2016learning}. These works make several assumptions not used in this work: that the agents can learn new communication skills, and that an agent can train with its teammates before execution. An exception to that is the work of Van Zoelen et al. \shortcite{van2020learning} where an agent learns to communicate both beliefs and goals, and applies this knowledge within human-agent teams. Their work differs from ours in the type of communication they allow.

Several other metrics have been proposed in the past to evaluate the ambiguity of a domain and a plan. Worst Case Distinctiveness (\textsc{wcd}) is defined as the longest prefix that any two plans for different goals share \cite{keren2014goal}. Expected Case Distinctiveness (\textsc{ecd}) weighs the length of a path to a goal by the probability of an agent choosing that goal and takes the sum of all the weighted path lengths \cite{wayllace2017new}. Both these works only evaluate the distinctiveness between two goals with specific assumptions about how an agent plans, while \textsc{edp} evaluate the \emph{expected} case distinctiveness for \emph{any} pair of policies, which may or may not be policies to achieve two different goals. 

A specific type of \textsc{cat} scenarios refers to tasks where a single agent reasons about the sequential plans of other agents, and can gain additional information by querying its teammates or by changing the environment \cite{mirsky2018sequential,mirskypenny,shvo2020active}. In this paper, we focus on a specific variant of this problem known as Sequential One-shot Multi-Agent Limited Inquiry \textsc{cat} scenario, or \textsc{somali cat} \cite{mirskypenny}. Consider the use case of a service robot that is stationed in a hospital, that mainly has to retrieve supplies for physicians or nurses, and has two main goals to balance: understanding the task-specific goals of its human teammates, and understanding when it is appropriate to ask questions over acting autonomously. As the name \textsc{somali cat} implies, this scenario includes several additional assumptions: The task is episodic, one-shot, and requires a sequence of actions rather than a single action (like in the multi-armed bandit case); the environment is static, deterministic, and fully observable; the teammate is assumed to plan optimally, given that it is unaware of the ego agent's plans or costs; and there is one communication channel, where the ego agent can query as an action, and if it does, the teammate will forgo its action to reply truthfully (the communication channel is noiseless). The definition of \textsc{edp} and the algorithm presented in this paper rely on this set of assumptions as well. While previous work by the authors in \textsc{somali cat} gave effective methods for determining when to ask a query~\cite{mirskypenny}, they did not provide a principled method for determining what to query. In this work, we extend previous methods with a principled technique for constructing a query.

\section{Background}
The notation and terminology we use in this paper is patterned closely after that of Albrecht and Stone~\shortcite{albrecht2017reasoning}, extended to reason about communication between agents. We define a \textsc{somali cat} problem as a tuple $C=\langle S, A_A, A_T, T, C \rangle$ where $S$ is the set of states, $A_A$ is the set of actions the ad-hoc agent can execute, $A_T$ is the set of actions the teammate can execute, $T$ is the transition function $T:S\times A_A \times A_T \times S \to [0,1]$, and $C: A_A \times A_T \rightarrow \mathbb{R}$ maps joint actions to a cost. Specifically, $A_A = O_A \cup Q$ consists of a set of actions $O_A$ that can change the state of the world, which we refer to as \emph{ontic} actions, as defined in Muise et al. \shortcite{muise2015planning}, and a set of potential \emph{queries} $Q$ it can ask. Similarly, $A_T = O_T \cup R$ is a set of ontic actions that the teammate can execute $O_T$ and the set of possible \emph{responses} $R$ to the ego agent's query. 
 Actions in $O_i$ can be selected independently from the decisions of other agents, but if the ego agent chooses to query the teammate at timestep $t$, then the action of the ego agent is some $q \in Q$ and the teammate's action must be $r \in R$. In this case, if the ego agent queries, both agents forego the opportunity to select an ontic action; the ego agent uses an action to query and the teammate uses an action to respond.
A \emph{policy $\pi$} for agent $i$ is a distribution over tuples of states and actions, such that $\pi_{i}(s,a)$ is the probability with which agent $A_i$ executes $a \in A_i$ in state $s \in S$. 
Policy $\pi_i$ induces a \emph{trajectory} $Tr_i=\langle s^0, a_i^1, s^1, a_i^2, s^2... \rangle$.
The cost of a joint policy execution is the accumulated cost of the constituent joint actions of the induced trajectories. 
For simplicity, in this work we assume that all ontic actions have a joint cost of 1. Queries and responses have different costs depending on their complexity.

Additional useful notation that will be used throughout this paper is the \emph{Uniformly-Random-Optimal Policy} (or \textsc{urop}) $\hat{\pi}_{g}$, that denotes the policy that considers all plans that arrive at goal $g$ with minimal cost, samples uniformly at random from the set of these plans, and then chooses the first action of the sampled plan. 

In order to ground $Q$ to a discrete, finite set of potential queries, we first need to define a \emph{goal} of the teammate $g$ as the set of states in $S$ such that a set of desired properties holds (e.g., both the physician and the robot are in the right room). Given this definition, we can use a concrete set of queries $Q$ of the format: ``Is your goal in $G$?" where $G$ is a subset of possible goals. This format of queries was chosen as it can replace any type of query that disambiguates between potential goals (e.g. ``Is your goal on the right?", ``What are your next k actions?"), as long as we know how to map the query to the goals it can disambiguate.

\section{Expected Divergence Point}
The first major contribution of this paper is a formal way to quantify the ambiguity between two policies, based on the number of steps we expect to observe before we see an action that can only be explained by a plan that can be executed when following one policy, but not by a plan that follows the other policy. Consider 2 policies $\pi_1,\pi_2$, 
and assume that policy $\pi_2$ was used to construct the trajectory $Tr_2$.  
\begin{definition}
The \textbf{divergence point (\textsc{dp})} from $\pi_1$, or the minimal point in time such that we know $Tr_2$ was not produced by $\pi_1$, is defined as follows:
\begin{equation*}
    dp(\pi_1 \mid Tr_2) = min\{t \mid \pi_1(s_{t-1}, a_2^t) = 0\}
\end{equation*}
\end{definition}

Figure~\ref{fig:edp} shows an example in which the teammate is in the light gray tile $(4,3)$ and is marked using the robot image. The goal of this agent can either be $g_1$ or $g_2$ (dark gray tiles), and the policies $\hat{\pi}_{g_1}$ and $\hat{\pi}_{g_2}$ are the \textsc{urop}s for  $g_1$ and $g_2$ respectively.
If an agent were to follow the path outlined in red starting at state $(4,3)$, then the $dp$ of this path with $\hat{\pi}_{g_1}$ would be 3 as the path diverges from $\hat{\pi}_{g_1}$ at timestep 3. 

\begin{figure}
    \centering
    \includegraphics[width=\linewidth]{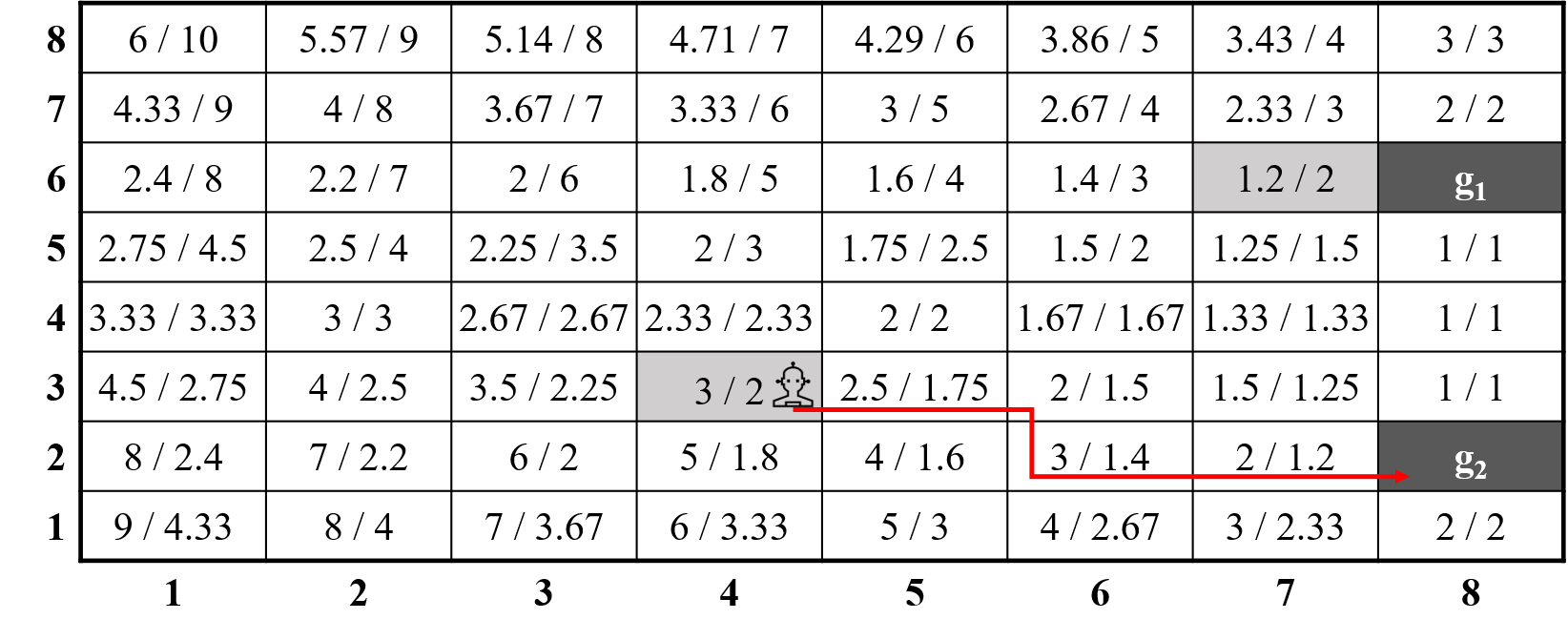}
    \vspace{-5mm}
    \caption{Various \textsc{edp} values in a grid world with two goals. Values are presented as $\textsc{edp}(s,\hat{\pi}_{g_1} | \hat{\pi}_{g_2})/\textsc{edp}(s, \hat{\pi}_{g_2} | \hat{\pi}_{g_1})$.}
    \label{fig:edp}
\end{figure}

To account for stochastic policies, where a policy will generally produce more than one potential trajectory from a state, we introduce the \emph{expected} divergence point that considers all possible trajectories constructed from $\pi_2$.
\begin{definition}
The \textbf{Expected Divergence Point (\textsc{edp})} of policy $\pi_1$ from $\pi_2$, starting from state $s$ is
\begin{equation*}
    \textsc{edp}(s, \pi_1 \mid \pi_2) = \mathbb{E}_{Tr_2}[dp(\pi_{1} \mid Tr_2)]
\end{equation*}
\end{definition}


\noindent Computing \textsc{edp} directly from this equation is non trivial. We therefore rewrite \textsc{edp} as the following Bellman update:
\begin{theorem}
\label{thm:bellman}
\textbf{The Bellman update for \textsc{edp}} of the policies $\pi_1, \pi_2$, starting from state $s$ is:
\begin{equation}
\scriptsize
\begin{split}
    \textsc{edp} & (s, \pi_1 |  \pi_2) = [1 - \sum\limits_{a\in A'(s)}\pi_2(s, a)] + \\ &
    \sum\limits_{a\in A'(s)}\pi_2(s,a) * \sum\limits_{s'\in S}T(s, a, s')[1 + \textsc{edp}(s', \pi_1 |  \pi_2)]
\end{split}
\normalsize
\label{eq:bellman}
\end{equation}
where $A'(s)=\{a\in A | \pi_1(s, a) > 0\}$.
\end{theorem}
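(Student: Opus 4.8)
The plan is to derive the Bellman update directly from the definition of $\textsc{edp}$ by conditioning on the teammate's first action and invoking the time-homogeneity of $\pi_2$ and $T$. The starting point is a recursive decomposition of the divergence point itself. Fix a trajectory $Tr_2 = \langle s^0, a_2^1, s^1, a_2^2, \ldots\rangle$ generated by $\pi_2$ from $s^0 = s$. Note that $t=1$ is a divergence time exactly when $\pi_1(s, a_2^1) = 0$, i.e.\ when $a_2^1 \notin A'(s)$. Hence if $a_2^1 \notin A'(s)$ then $dp(\pi_1 \mid Tr_2) = 1$, and otherwise the minimizing $t$ is at least $2$. First I would verify, by the reindexing $t \mapsto t-1$, that in the latter case $dp(\pi_1 \mid Tr_2) = 1 + dp(\pi_1 \mid Tr_2')$, where $Tr_2' = \langle s^1, a_2^2, s^2, \ldots\rangle$ is the suffix obtained by deleting $s^0$ and $a_2^1$. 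Together these give the single identity $dp(\pi_1 \mid Tr_2) = 1 + \mathbf{1}[a_2^1 \in A'(s)]\cdot dp(\pi_1 \mid Tr_2')$.

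Next I would take expectations over $Tr_2$ and apply the law of total expectation, conditioning first on the action $a_2^1 = a$ and then on the successor state $s^1 = s'$. The critical step is the Markov property: conditioned on $s^1 = s'$, the suffix $Tr_2'$ is distributed exactly as a fresh trajectory generated by $\pi_2$ from $s'$, so that $\mathbb{E}[dp(\pi_1 \mid Tr_2') \mid s^1 = s'] = \textsc{edp}(s', \pi_1 \mid \pi_2)$. Splitting the outer sum over $a$ into the cases $a \notin A'(s)$ (contributing $dp=1$, with total probability $1 - \sum_{a\in A'(s)}\pi_2(s,a)$) and $a \in A'(s)$ (each contributing $1 + \sum_{s'} T(s,a,s')\,\textsc{edp}(s',\pi_1\mid\pi_2)$) yields $\textsc{edp}(s,\pi_1\mid\pi_2) = [1 - \sum_{a\in A'(s)}\pi_2(s,a)] + \sum_{a\in A'(s)}\pi_2(s,a)[1 + \sum_{s'} T(s,a,s')\,\textsc{edp}(s',\pi_1\mid\pi_2)]$. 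Finally, since $\sum_{s'} T(s,a,s') = 1$, the inner constant $1$ can be absorbed, $1 + \sum_{s'} T(s,a,s')\,\textsc{edp}(s',\cdot) = \sum_{s'} T(s,a,s')[1 + \textsc{edp}(s',\cdot)]$, which is exactly Equation~\eqref{eq:bellman}.

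The main obstacle I anticipate is not the algebra but the bookkeeping around well-definedness: $\mathbb{E}_{Tr_2}[dp]$ is an expectation over infinite trajectories, and $dp$ takes the value $+\infty$ on the event that $\pi_2$ never selects an action forbidden by $\pi_1$ along the entire path (for instance whenever the support of $\pi_2$ lies inside that of $\pi_1$ at every reachable state). I would therefore treat the recursion as an identity in the extended nonnegative reals $[0,+\infty]$. Because $dp \geq 0$ everywhere, Tonelli's theorem justifies every interchange used above — the tower property over $(a_2^1, s^1)$ and swapping the finite action sum with the expectation — with no integrability hypothesis, and the resulting fixed-point equation is consistent with the degenerate solution $\textsc{edp} \equiv +\infty$ exactly in the non-diverging case. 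A secondary point worth stating explicitly is that the state and action sets are discrete, so each conditional expectation is an honest sum and the Markov step reduces to time-homogeneity of the trajectory distribution induced by the stationary policy $\pi_2$; no continuity or measurability subtleties beyond countable additivity arise.
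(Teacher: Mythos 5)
Your proof is correct, but it takes a genuinely different route from the paper's. The paper works at the level of the distribution of the divergence point: it writes $\textsc{edp}(s,\pi_1\mid\pi_2)=\sum_{t=1}^{\infty}t\cdot P(dp(\pi_1\mid Tr_2)=t)$, expands each probability as a product of per-step survival factors, factors the common first-step term out of the tail of the series, and re-sums the bracketed remainder back into an \textsc{edp} at the successor state. You instead establish a \emph{pathwise} recursion on the divergence point itself, $dp(\pi_1\mid Tr_2)=1+\mathbf{1}[a_2^1\in A'(s)]\cdot dp(\pi_1\mid Tr_2')$, and then take expectations once via the tower property and the Markov/time-homogeneity of the trajectory distribution induced by $\pi_2$. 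The two derivations land on the same equation (your form $1+\sum_{a\in A'(s)}\pi_2(s,a)\sum_{s'}T(s,a,s')\,\textsc{edp}(s')$ is algebraically identical to Equation~\eqref{eq:bellman} after distributing $\sum_{s'}T(s,a,s')=1$). What your approach buys is rigor and economy: the single interchange of expectation and summation is justified unconditionally by Tonelli since $dp\geq 0$, and you explicitly handle the event that $Tr_2$ never diverges from $\pi_1$, on which $dp=+\infty$ --- a case the paper's manipulation of the infinite series silently ignores (its term-by-term rearrangement tacitly assumes the series converges, and the ``second summation simplifies to 1'' step assumes the divergence probabilities sum to one). What the paper's approach buys is a more explicit view of the distribution $P(dp=t)$, which it reuses conceptually elsewhere; but as a proof of the theorem your conditioning argument is the cleaner and more robust of the two.
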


\begin{proof}
We first define $P(dp(\pi_1 \mid Tr_2)=n)$ to be the probability of seeing $n$ steps before observing an action that $\pi_1$ will not take in state $s_{n-1}$, assuming that $Tr_2$ is some trajectory sampled from $\pi_2$. Then $\textsc{edp}$ can be written as:
\begin{equation}
\scriptsize
\label{eq:sum}
    \textsc{edp}(s, \pi_1 \mid \pi_2) = \sum\limits_{t=1}^\infty[P(dp(\pi_1 \mid Tr_2) = t)*t]
\normalsize
\end{equation}

Next, we compute this probability as the joint probability of the $k$-th action in $\pi_1$ being different from $a_2^k$ for all $k < n$ and $\pi_1(s_{n-1},a_1^n)=a_2^n$:

\begin{equation}
\label{eq:series}
\scriptsize
\begin{split}
    & P(dp(\pi_1 \mid Tr_2) = 1) = P(\pi_1(s_0, a_2^1) = 0), \\
    & P(dp(\pi_1 \mid Tr_2) = 2) = P(\pi_1(s_0, a_2^1) \neq 0)*P(\pi_1(s_1, a_2^2) = 0), \\
    & P(dp(\pi_1 \mid Tr_2) = 3) = P(\pi_1(s_0, a_2^1) \neq 0)*P(\pi_1(s_0, a_2^2) \neq 0) * \\ & \qquad \qquad \qquad \qquad \qquad ~~~~~~ P(\pi_1(s_1, a_2^3) = 0)
\end{split}
\normalsize
\end{equation}
and so on. Given these equations, we can rewrite the infinite summation in Equation \ref{eq:sum}:
\begin{equation*}
\scriptsize
\begin{split}
     \textsc{edp} & (s, \pi_1 | \pi_2) = \\
     & P(\pi_1(s_0, a_2^1) = 0) + \\
     & 2*P(\pi_1(s_0, a_2^1) \neq 0)*P(\pi_1(s_1, a_2^2) =0) + \\
     & 3*P(\pi_1(s_0, a_2^1) \neq 0)*P(\pi_1(s_0, a_2^2) \neq 0)*P(\pi_1(s_1, a_2^3) = 0) + \ldots
\end{split}
\normalsize
\end{equation*}

\noindent We can factor out a $P(\pi_1(s_0, a_1) \neq 0)$ from all of the first portion of the summation to get the following:
\begin{equation*}
\scriptsize
\begin{split}
    \textsc{edp} & (s, \pi_1 | \pi_2) = \\
    & P(\pi_1(s_0, a_2^1) = 0) + P(\pi_1(s_0, a_2^1) \neq 0) * \\
    & [2*P(\pi_1, a_2^2 = 0) + 3*(\pi_1(s_0, a_2^2) \neq 0)*P(\pi_1(s_1, a_2^3) = 0) + \ldots]
\end{split}
\normalsize
\end{equation*}
In reverse to the transition from Equation~\ref{eq:sum} to Equation~\ref{eq:series} and by using Bayes rule, the bracketed portion can be compiled back into an infinite sum: 
\begin{equation*}
\small
\begin{split}
    \textsc{edp} & (s, \pi_1 | \pi_2) = \\
    & P(\pi_1(s_0, a^1_2) = 0) + P(\pi_1(s_0, a^1_2) \neq 0) * \\ & \sum\limits_{t=1}^\infty[(t+1) * P(dp(\pi_1 \mid Tr_2) = t+1 \mid \pi_1(s_0, a^1_2) \neq 0)]
\end{split}
\normalsize
\end{equation*}
We distribute the multiplication by $(t+1)$ inside the summation to arrive at the following:
\begin{equation*}
\small
\begin{split}
    \textsc{edp} & (s, \pi_1 | \pi_2) = \\
    & P(\pi_1(s_0,a_2^1) = 0) + P(\pi_1(s_0, a_2^1) \neq 0) * \\ & [\sum\limits_{t=1}^\infty(t * P(dp_{\pi_1}(Tr_2) = t+1 \mid \pi_1(s_0, a_2^1) \neq 0)) + \\ & \sum\limits_{t=1}^\infty (P(dp_{\pi_1}(Tr_2) = t+1 \mid \pi_1(s_0, a_2^1) \neq 0))]
\end{split}
\normalsize
\end{equation*}

\noindent The second summation simplifies to 1, while the first is equivalent to the \textsc{edp} at the following state. Using this knowledge 
we derive the following formula for \textsc{edp}:
\begin{equation*}
\small
\begin{split}
    \textsc{edp} & (s, \pi_1 |  \pi_2) = \\
    & P(\pi_1(s_0, a_2^1) = 0) + P(\pi_1(s_0, a^2_1) \neq 0) * \\ &
    [\sum\limits_{s'\in S}T(s,a_2^0,s')*(1+\textsc{edp}(s', \pi_1 | \pi_2))]
\end{split}
\normalsize
\end{equation*}

Remember that the term $P(\pi_1(s_0, a_2^1) \neq 0)$ is the probability of sampling a first action in $Tr_2$ that will be the same as the action taken in $s_0$ according to $\pi_1$. 
Consider the set $A'(s) = \{a\in A | \pi_1(s,a) > 0\}$, and note that $P(\pi_1(s_0, a_2^1) \neq 0)$ is the same as $\sum\limits_{a\in A'(s_0)}\pi_2(s_0, a)$.
We therefore arrive at the Bellman update from Equation~\ref{eq:bellman}.
\end{proof}

There are a few things to note regarding \textsc{edp}. First, it is not a symmetric relation: $\textsc{edp}(s, \pi_1 | \pi_2)$ does not necessarily equal $\textsc{edp}(s, \pi_2 | \pi_1)$. For example, Figure~\ref{fig:edp} presents the value of $\textsc{edp}(s, \hat{\pi}_{g_1} | \hat{\pi}_{g_2})$ and the value of $\textsc{edp}(s, \hat{\pi}_{g_2} | \hat{\pi}_{g_1})$ side by side for each tile. In addition, neither of the $\textsc{edp}$ values necessarily equals the $\textsc{ecd}$ of this domain, as presented by Wayllace et al. \shortcite{wayllace2017new}, where $\textsc{ecd}$ estimates the probability of taking action $a$ in state $s$ as the normalized weighted probability of all goals for which $a$ is optimal from $s$. 
 Consider tile $(7,6)$ which is colored in light gray. Assuming uniform probability over goals, the $\textsc{ecd}$ when the teammate is in this state is $1.67$, a value distinct from both the \textsc{edp} values in this state, as well as from their average.
 
 Next, we show how \textsc{edp} can be used to compute the timesteps in which the ego agent is expected to benefit from querying, and then how this information can be used by the ego agent for planning when and what to query.
 
 
\section{Expected Zone of Querying}
Previous work provided a means to reason about when to act in the environment and when to query, in the form of three different reasoning zones general to all \textsc{somali cat} domains~\cite{mirskypenny}. The zones that were defined with respect to querying are: 
\begin{description} 
\item [Zone of Information ($Z_I$)] given the location of the teammate and two of its possible goals $g_1,g_2$, $Z_I$ is the interval from the beginning of the plan up to the worst case distinctiveness of $g_1,g_2$ for recognizing the teammate's goal, as defined by Keren et al. \shortcite{keren2014goal}.


Intuitively, these are the timesteps where the ego agent might gain information by querying about these goals.

\item [Zone of Branching ($Z_B$)] given the location of the ego agent and two possible goals of the teammate, $Z_B$ is the interval from the worst case distinctiveness of $g_1,g_2$ for recognizing the ego agent's goal and up to the end of the episode.
Intuitively, these are the timesteps where the ego agent might take a non-optimal action if it could not disambiguate between the two goals of its teammate.


\item [Zone of Querying ($Z_Q$)] given the locations of the ego agent, the teammate, and two possible goals of the teammate, $Z_Q$ is the intersection of $Z_I$ and $Z_B$ for these goals, where there may be a positive value in querying about the two goals instead of acting. Intuitively, these are the timesteps where the ego agent cannot distinguish between the two possible goals of the teammate and it should take different actions in each case\footnote{For the rest of the paper, when it is clear from context, we omit the state of the agents from the use of zones for brevity.}.
\end{description}

\noindent  Consider the running example from Figure~\ref{fig:edp}, and assume that this grid represents a maximal coverage task, where the agents need to go to different goals in order to cover as many goals as possible~\cite{pita2008deployed}. Consider an ego agent with the aim to go to a different goal from its teammate which is in tile $(4,3)$. The Zone of Information which depends only on the behavior of the teammate, is $Z_I(\{g_1,g_2\})=\{t \mid t \leq 5\}$, as 5 is the maximum number of steps that the teammate can take before its goal is disambiguated (e.g., if the agent moves east 4 times only its fifth action will disambiguate its goal). If the ego agent is in tile $(5,4)$, then $Z_B(\{g_1,g_2\})=\{t \mid t \geq 4\}$ and hence $Z_Q(\{g_1,g_2\})=\{4,5\}$. However, if the ego agent is in tile $(2,4)$, then $Z_B(\{g_1,g_2\})=\{t \mid t \geq 7\}$, and hence in this case $Z_Q(\{g_1,g_2\})=\emptyset$.
These existing definitions of zones consider the \emph{worst case} for disambiguating goals. However, the worst case might be highly uncommon, and planning when to query accordingly can induce high costs. 
Using \textsc{edp}, we now have the requirements to compute the \emph{expected case} for disambiguation of goals. We introduce definitions for expected zones: 
\begin{definition}
The \textbf{Expected Zone of Information $eZ_I$} given the location of the teammate and two goals $g_1, g_2$, $eZ_I$ is the expected time period during which the teammate's plan is ambiguous between $g_1$ and $g_2$:
\begin{equation*}
    eZ_I(s, g_1 | g_2) = \{t| t \leq \textsc{edp}(s, \hat{\pi}_{g_1} | \hat{\pi}_{g_2})\}
\end{equation*}
where the policies $\hat{\pi}_{g_1},\hat{\pi}_{g_2}$ are the \textsc{urop}s of the teammate for goals $g_1$ and $g_2$ respectively.

\end{definition}

\noindent Intuitively, $eZ_I$ for two goals is the set of timesteps where we don't expect to see an action that can only be executed by $g_1$ but not by $g_2$. Similarly, $eZ_B$ for two goals is the set of all timesteps where we expect that the ego agent will take a non-optimal action if it does not have perfect knowledge about the teammate’s true goal.

\begin{definition}
The \textbf{Expected Zone of Branching} for goals $g_1, g_2$ is the expected time period where the ego agent can take an optimal action for $g_2$ without incurring additional cost if the teammate's true goal is $g_1$:
\begin{equation*}
    eZ_B(s, g_1 | g_2) = \{t | t \geq \textsc{edp}(s, \hat{\pi}_{g_2} | \hat{\pi}_{g_1})\}
\end{equation*}
where the policies $\hat{\pi}_{g_1}$ and $\hat{\pi}_{g_2}$ are the \textsc{urop}s of the \emph{ego agent} for goals $g_1$ and $g_2$ respectively.
\end{definition}

\begin{definition}
The \textbf{Expected Zone of Querying}  for goals $g_1, g_2$ is the time period where the ego agent is expected to benefit from querying:
\begin{equation*}
    eZ_Q(s, g_1 | g_2) = eZ_B(s, g_1 | g_2) \cap eZ_I(s, g_1 | g_2)
\end{equation*}
\end{definition}
In our empirical settings, we have control over the ego agent, and can therefore guarantee it will follow an optimal plan given its current knowledge about the teammate's goal. In this case, we can use the original definition of $Z_B$ instead of $eZ_B$, as they are equivalent in this setting.

\section{Using $eZ_Q$ for Planning}
\label{sec:alg}
Next, we present a planning algorithm for the ego agent that uses the value of a query to minimize the expected cost of the chosen plan. In this planning process, there are two main problems that need to be solved: determining when to query and what to query. Using the definition of $Z_Q$, we can easily determine when querying would certainly be redundant, and respectively, when a query might be useful. Once we know that a query might be beneficial, we can use $eZ_Q$ to determine what to query exactly. Notice that the conclusion might still be not to query at all. For example, consider the maximal coverage example from Figure~\ref{fig:edp}, where the teammate is in tile $(4,3)$, and the ego agent is in tile $(8,4)$. As the teammate can choose to move east, there is still ambiguity between $g_1$ and $g_2$ and the ego agent must choose between moving north or south - so according to $Z_Q$ the ego agent should query. However, if it highly unlikely that the teammate would go east, the expected gain from querying decreases significantly. Thus, even though $Z_Q$ is not empty, it is not a good strategy for the ego agent to query. 
 
In this section, we discuss how to use the new $eZ_Q$ to compute the value of a specific query more accurately than proposed by previous approaches
This information will be used in an algorithm for the ego agent that chooses the best query. 
The first step is calculating the \textsc{edp} for each pair of goals $g_1$ and $g_2$ and their corresponding \textsc{urop}s $\hat{\pi}_{g_1}$ and $\hat{\pi}_{g_2}$. 
We use a modified version of the dynamic programming policy evaluation algorithm \cite{bellman1966dynamic} applied on the bellman update presented in Equation~\ref{eq:bellman} (additional details can be found in the Appendix. 
As this policy evaluation does not depend on the teammate's actions, it can be done offline prior to the plan execution, as presented in Figure~\ref{fig:edp}. Next, using these values, we can compute the value of a specific query.

\subsection{Computing the Value of a Query}

Given a \textsc{somali cat} domain $\langle S, A_A, A_T, T, C\rangle$, we want to 
quantify how much, in terms of expected plan cost, the ego agent can gain from asking a specific query.
Therefore, we define the value of a query as follows:
\begin{definition}
\textbf{Value of a Query} Let $q$ be a query with possible responses $R$, and $\mathcal{P}_g$ be the set of possible plans of both agents that arrive at goal $g$. Then the value of query $q$ is
\begin{equation}
    V(q) = \mathbb{E}_{p\in \mathcal{P}_g}[cost(p)] - \sum\limits_{r\in R}P(r)*\mathbb{E}_{p\in \mathcal{P}_g}[cost(p) | r]
\label{eq:value}
\end{equation}
\end{definition}

\noindent Computing the expected cost of this set of plans is non-trivial. We therefore define the following concept:
\begin{definition}
The \emph{Marginal Cost} of a plan $p$ in a \textsc{somali cat} domain with a goal $g$ ($MC_g(p)$) is the difference between the cost of $p$ and the cost of a minimal-cost plan that arrives at $g$.
\label{def:mc}
\end{definition}

\noindent We can replace $cost(p)$ in Equation~\ref{eq:value} with $MC_g(p)$ to yield an equivalent formula that is easier to compute:
\begin{equation}
\small
\begin{split}
    V & (q) = \\ 
    & (\mathbb{E}_{p\in \mathcal{P}_g}[cost(p)] - c^*_g) - (\sum\limits_{r\in R}P(r)*\mathbb{E}_{p\in \mathcal{P}_g}[cost(p) | r] - c^*_g) =\\ &
    \mathbb{E}_{p\in \mathcal{P}_g}[MC_g(p)] - \sum\limits_{r\in R}P(r)*\mathbb{E}_{p\in \mathcal{P}_g}[MC_g(p) | r]
    \end{split}
    \label{eq:v_q}
\end{equation}
where $c^*_g$ is the minimal cost of a plan that arrives at $g$.

In \textsc{somali cat},
$\mathbb{E}_{p\in\mathcal{P}_g}[MC_g(p)]$ at state $s$ is proportional to the number of timesteps in which the ego agent doesn't know which ontic actions are optimal, or formally:
\begin{equation*}
    \mathbb{E}_{p\in \mathcal{P}_g}[MC_g(p)]\propto |\bigcup\limits_{g'\in G}eZ_Q(s, g' | g)|
\end{equation*}

In addition, notice that computing $V(q)$ using Equation~\ref{eq:v_q} assumes that the goal of the teammate, $g$, is known. However, the ego agent does not know the true goal $g$ ahead of time, so we need to consider the expected value of a query for each possible goal of the teammate, or $\mathbb{E}_{g\in G}[V(q) | g]$.

\subsection{Choosing What to Query}


Our policy for determining whether or not to query at each timestep is shown in Algorithm~\ref{alg:query}. It takes as input the Zone of Querying for each pair of goals, $Z_Q$, the expected Zone of Querying $eZ_Q$, the current set of possible goals $G$, the ego agent’s current belief of the teammate’s goal $P$ and the current timestep $t$. First the algorithm checks if the current timestep is within a Zone of Querying of two goals or more. If not, then the agent knows of an optimal ontic action and no querying is required. Otherwise, we then find the best possible query given $eZ_Q$, and only ask this query if its value is greater than its cost.
\begin{algorithm}[t]
\caption{Query Policy}
\begin{algorithmic}
\Procedure{Query}{$Z_Q$ for each pair of goals, $eZ_Q$ for each pair of goals, $G$, $P$, current timestep $t$}
\If{$\exists g_1,g_2\in G \textit{  s.t.  } t\in Z_Q(g_1,g_2))$}
\State $query \gets ChooseQuery(G, eZ_Q, P)$
\If{$value(query) - cost(query) > 0$}
\State \Return $query$
\EndIf
\EndIf
\State \Return No Query
\EndProcedure
\end{algorithmic}
\label{alg:query}
\end{algorithm}

To optimize the expected value of a chosen query, we define a binary vector $\textbf{x}$ for each possible query, such that $x_i$ is 1 if and only if $g_i$ is included in that query. We then optimize for the difference between the value of a query above and its cost over these vectors using a genetic algorithm. We use a population size of $50$ and optimize for $100$ generations. Members are selected using tournament selection, and then combined using crossover. Each bit in the two resulting members is mutated with probability $0.001$.

\section{Experimental Setup}
\begin{figure}
    \centering
    \includegraphics[width=0.6\linewidth]{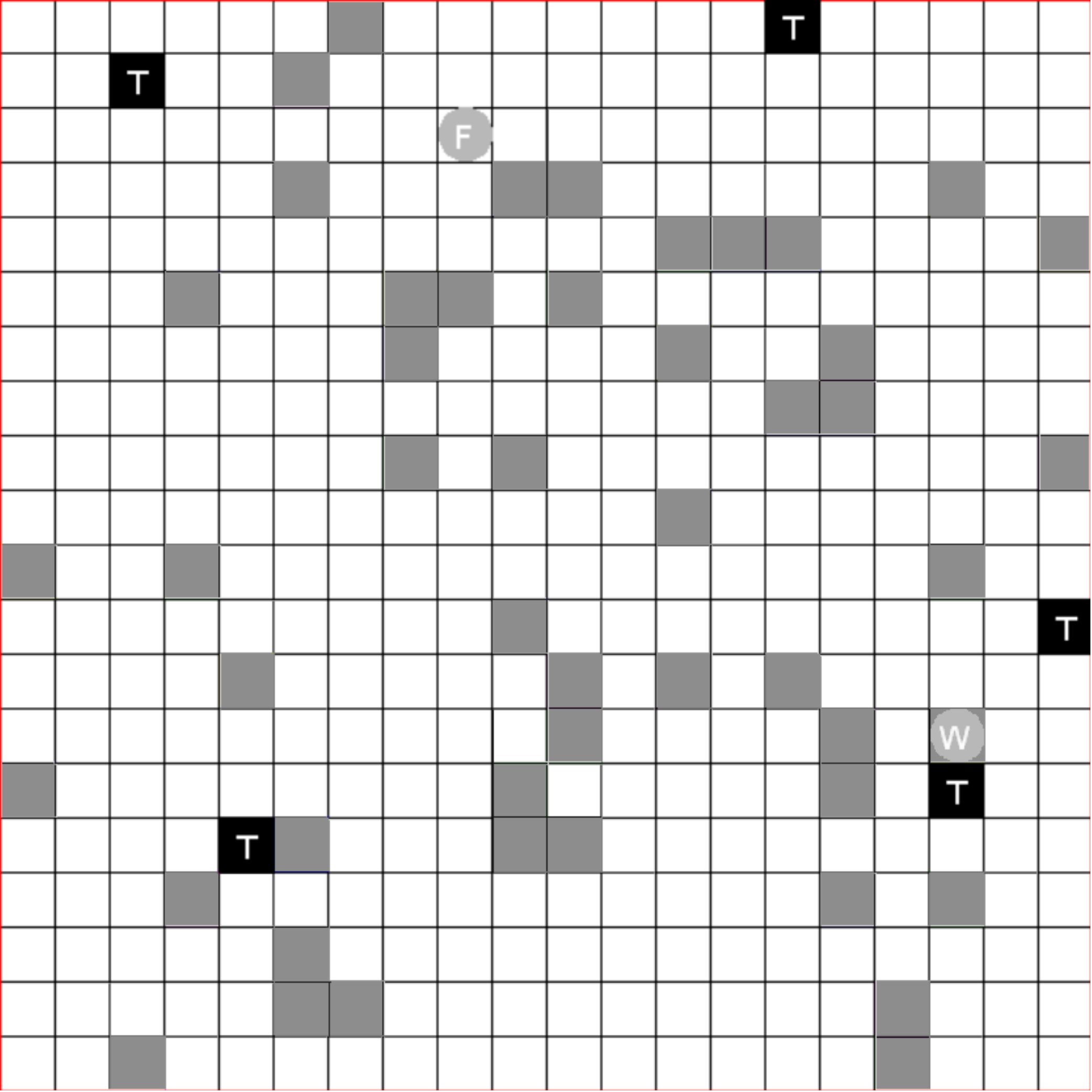}
    \caption{Example instance of the tool fetching domain. Workstations are shown by the grey boxes, toolboxes are shown by the black boxes with a T, the fetcher is shown as the gray circle labeled with an F, while the worker is shown as a gray circle labeled with a W.}
    \label{fig:domain_example}
\end{figure}

Previous work in \textsc{somali cat} introduced an experimental domain known as the tool fetching domain~\cite{mirskypenny}. This domain consists of an ego agent, the \emph{fetcher}, attempting to meet a teammate, the \emph{worker}, at some workstation with a tool. The worker needs a specific tool depending on which station is its goal, and the worker's goal and policy are unknown to the fetcher. It is the job of the fetcher to deduce the goal of the worker based on its actions, to pick up the correct tool from a toolbox and to bring it to the correct workstation. At each timestep, the agents can execute one ontic action each. In this work, the fetcher infers the worker's true goal by setting a goal's probability to zero and normalizing the belief distribution when observing a non-optimal action for that goal. Alternatively, the fetcher can query the worker with questions of the form ``Is your goal one of the stations $g_1,g_2...g_N$?", where $g_1, \ldots, g_N \subseteq G$ is a subset of all workstations, and the worker replies truthfully. All queries are assumed to have a cost identical to moving one step, regardless of the content of the query.
To show the benefits of our algorithm, we introduce 3 generalizations to the tool fetching domain that make the planning problem for the ego agent more complex to solve. 
    \paragraph{Multiple $Z_B$s} We allow multiple toolboxes in the domain. Each toolbox contains a unique set of tools, and only one tool for each station is present in a domain. Including this generalization means that each pair of goals may have different $Z_B$ values, which makes determining query timing and content more challenging. 
    \paragraph{Non-uniform Probability} We allow non-uniform probability distributions for assigning the worker a goal. For instance, goals may be assigned with probability corresponding to the Boltzmann distribution over the negative of their distances. This modification means that the worker is more likely to have goals that are closer to it. Including this generalization means that querying about certain goals may be more valuable than others, and the fetcher will have to consider this distribution when deciding what to query about.
    \paragraph{Cost Model} We allow for a more general cost model, where different queries have different costs. In particular, we consider a cost model where each query has some \emph{base cost} and an additional cost is added for each station asked, a \emph{per-station cost}. So for instance if queries have a base cost of $0.5$ and a per-station cost of $0.1$, then the query ``Is your goal station 1, 3 or 5?'' would have a cost of $0.5 + 3*0.1 = 0.8$. Including this generalization means that larger queries will cost more, and it may be more beneficial to ask smaller but less informative queries.
Results used a cost of $0.5$ when initiating a query and varied the per station cost of each query. 

We compare the performance of Algorithm \ref{alg:query} (\emph{$eZ_Q$ Query}) against two baseline ego agents: one agent that never queries but always waits when it is uncertain about which action to take (\emph{Never Query}). The other baseline is the algorithm introduced by Mirsky et al. \shortcite{mirskypenny}, where the ego agent chooses a random query once inside a $Z_Q$ (\emph{Baseline}). In addition, we extended \emph{Baseline} in two ways to make it choose queries in a more informed way. First by accounting for the changing cost of different queries, as well as for the probability distribution over the teammate's goals (\emph{BL:Cost+Prob})~\cite{MackeDMAP}. The second method involves creating a set of stations that each ontic action would be optimal for, and then querying about the set with the median size of these sets. Intuitively, this method first attempts to disambiguate which toolbox to reach and then attempts to disambiguate the worker's station  (\emph{BL:Toolbox}). All methods take a NOOP action if they are uncertain of the optimal action and do not query. Additional details about the setup and the strategies can be found in the Appendix.




\section{Results}
We ran experiments in a $20\times 20$ grid, with $50$ workstations and $5$ toolboxes. Locations of the stations, toolboxes, and agents in each domain instance are chosen randomly. All results are averaged over $100$ domain instances.

\begin{figure}[t]
    \centering
    \includegraphics[width=\linewidth]{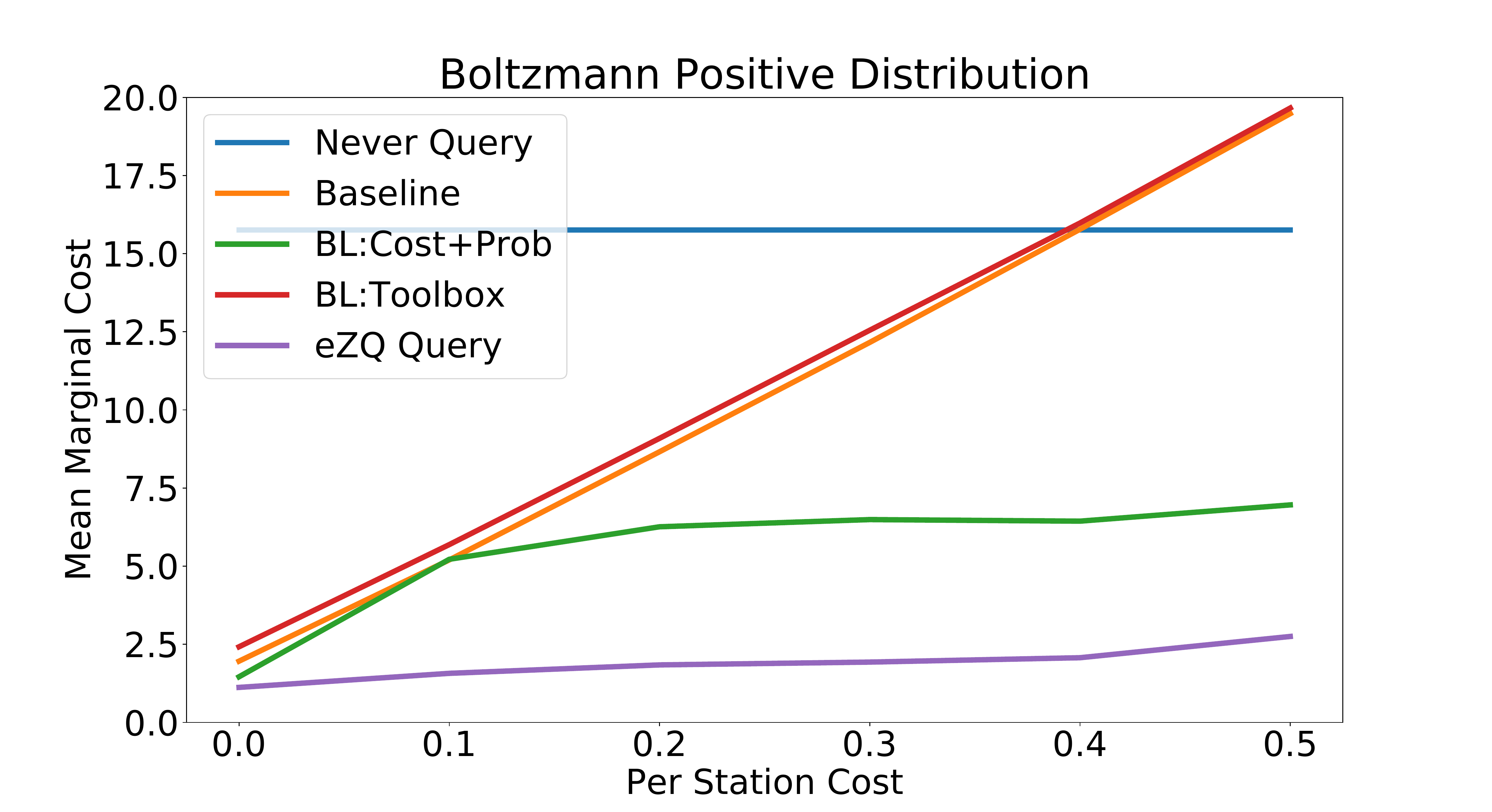}
    \vspace{-5mm}
    \caption{Per-station cost vs marginal cost of domain with Boltzmann distribution of the worker's distances to goals.}
    \label{fig:dist}
\end{figure}

\begin{figure}[t]
    \centering
    \includegraphics[width=\linewidth]{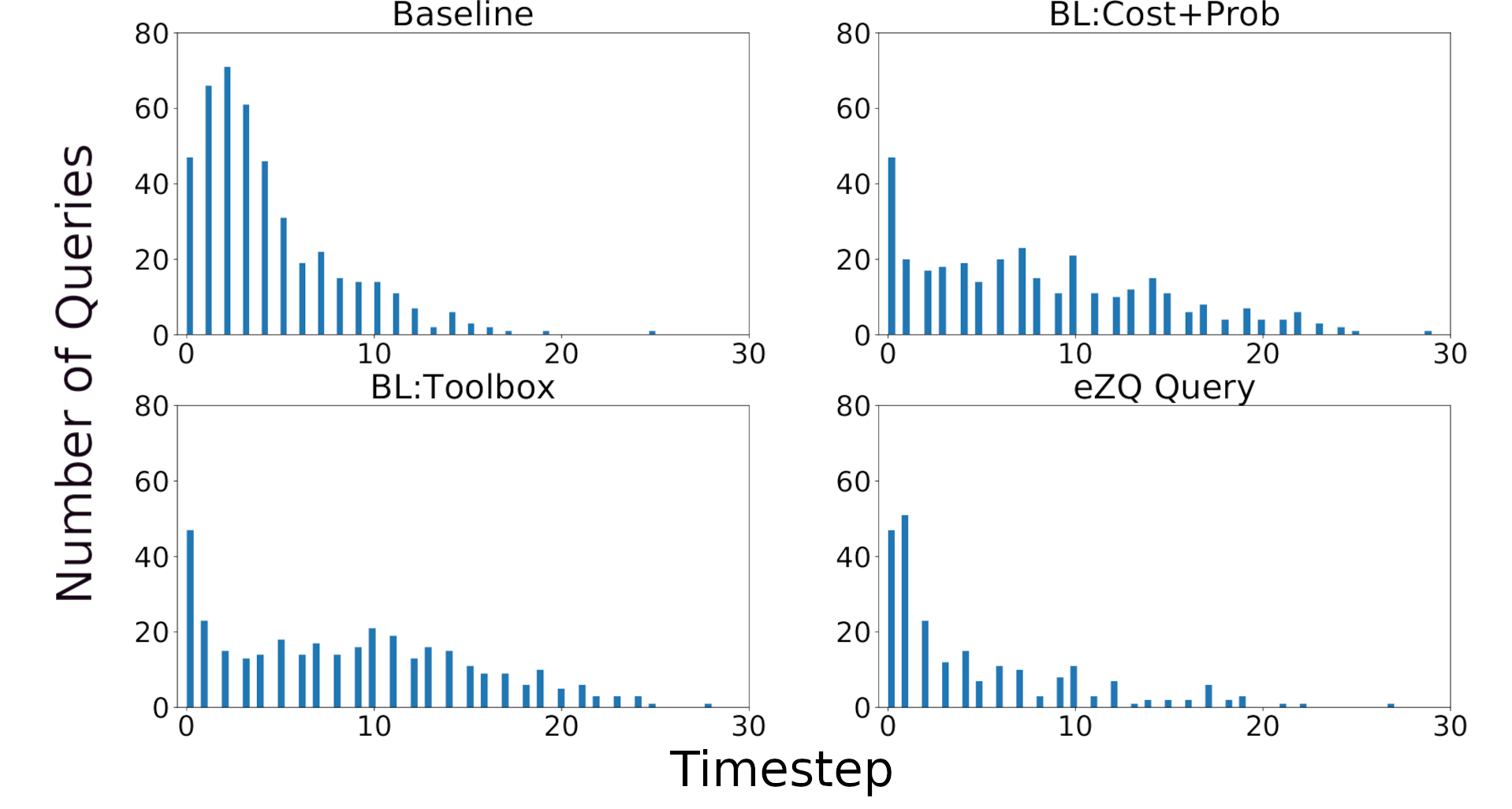}
    \vspace{-5mm}
    \caption{Histogram of number of queries (y-axis) executed per timestep (x-axis) by each method with the Boltzmann distribution of the worker's distances to goals and a per-station cost of 0.}
    \label{fig:hist}
\end{figure}

Figure~\ref{fig:domain_example} shows an example of such a tool-fetching domain. We now compare the new algorithm with previous work and the heuristics described above. We demonstrate that \emph{$eZ_Q$ Query} is able to effectively leverage the additional information from our generalizations to obtain a better performance over previous work and the suggested heuristics, in terms of marginal cost (Definition \ref{def:mc}). 

Figure~\ref{fig:dist} shows the marginal cost averaged over 100 domain instances with different per-station costs (x-axis) when the probability distribution of the goals of the worker is the Boltzmann distribution over the worker's distances to each goal. This distribution means that the worker is more likely to choose goals that are farther from it. \emph{$eZ_Q$ Query} performs similarly to the heuristics when the per-station cost is 0, but dramatically outperforms all other methods as this value increases.
Additional results showing \emph{$eZ_Q$ Query} under additional goal distributions can be seen in the Appendix.

We also provide an analysis of how the \emph{$eZ_Q$ Query} method is achieving better performance than other methods. Figure~\ref{fig:hist} shows a histogram of the number of queries executed per timestep by each method. As shown in the histogram, \emph{$eZ_Q$ Query} tends to ask more queries in the earlier timesteps compared to \emph{BL:Toolbox} and \emph{BL:Cost+Prob}. This increase is because \emph{$eZ_Q$ Query} is focused on learning the worker's true goal with minimal cost and is better able to leverage information about the probability distribution over goals to make informed queries, and therefore finds the correct goal more quickly than other methods, but also takes longer before it knows an optimal action.
In addition, we found that as the per-station cost increases from $0$ to $0.5$, the total number of queries executed by $eZ_Q$ query over all simulations decreases by $23\%$, showing that \emph{$eZ_Q$ Query} executes fewer queries when the cost is higher.



Finally, there is an increased computational cost of using \emph{$eZ_Q$ Query} over other approaches and the proposed heuristics. While calculating \textsc{ed}p is expensive and takes several orders of magnitude longer than the rest of the querying algorithm (several hours per domain on average), these values 
can be computed a priori regardless of the teammate's actions. As such, the following results are under the assumption that the \textsc{edp} computation is performed in advance, and the following time measurements do not include these offline computations. 
On average, all heuristic methods took $<0.23$ seconds to complete each simulation, while \emph{$eZ_Q$ Query} took on average $8.9$ seconds on an Ubuntu 16.04 LTS Intel core i7 2.5 GHz, with the genetic algorithm taking on average $6.1$ seconds to run. In practice, the increased time should not be a major detriment. If a robot is communicating with either a human or another robot, the major bottleneck is likely to be the communication channel (e.g. speech, network speed, decision making of the other agent) rather than this time. In addition, when using a genetic algorithm for optimization as we do in this paper, the \emph{$eZ_Q$ Query} computation should only grow in terms of $O(|G|^2log(|G|))$ with the number of goals (assuming that \textsc{edp} is precomputed ahead of time and that the number of members and generations do not grow with the number of goals).



\section{Discussion and Future Work}
In this paper, we investigated a new metric to quantify ambiguity of teammate policies in ad hoc teamwork settings, by estimating the expected divergence point between different policies a teammate might posses. We then utilized this metric to construct a new ad hoc agent that reasons both about \emph{when} it is beneficial to query, but also about \emph{what} is beneficial to query about in order to reduce the ambiguity about its teammate's goal. Our empirical results show that regardless of the goal-choosing policy of the worker and a varying query cost model, \emph{$eZ_Q$ Query} remains more effective than any of the other methods tested, and even when querying is almost never beneficial, it is still able to adapt and obtain performance that is consistently better than \emph{Never Query}.

The scope of this work is limited to \textsc{somali cat} problems. In addition, our current methods are designed to work in relatively simple environments with finite state spaces and a limited number of goals. However, the \textsc{edp} formalization opens up new avenues for investigating other 
complicated \textsc{somali cat} scenarios and other \textsc{cat} scenarios, such as those in which an agent can advise or share its beliefs with its teammates. We conjecture that the $eZ_Q$ algorithm can be modified relatively easily to address such challenges, as long as the ego agent remains the initiator of the communication. For instance, \textsc{edp} may be able to be calculated in domains with larger and continuous state spaces by leveraging more sophisticated RL techniques than the policy evaluation algorithm. It might be more challenging to extend this work to domains in which the teammate is the one to initiate the communication, as other works have investigated in the context of reinforcement learning agents \cite{torrey2013teaching,cui2018active}. Nonetheless, this work provides the means to investigate collaborations in ad hoc settings in new contexts, while presenting concrete solutions for planning in \textsc{somali cat} settings. 

\section*{Acknowledgements}
This work has taken place in the Learning Agents Research
Group (LARG) at UT Austin.  LARG research is supported in part by NSF
(CPS-1739964, IIS-1724157, NRI-1925082), ONR (N00014-18-2243), FLI
(RFP2-000), ARL, DARPA, Lockheed Martin, GM, and Bosch.  Peter Stone
serves as the Executive Director of Sony AI America and receives
financial compensation for this work.  The terms of this arrangement
have been reviewed and approved by the University of Texas at Austin
in accordance with its policy on objectivity in research.

\bibliography{aaai}
\bibliographystyle{aaai21}

\appendix
\section{Algorithm Description}
To calculate \textsc{edp} for two policies, we use a modified policy evaluation algorithm~\cite{bellman1966dynamic} using the Bellman update provided in Theorem \ref{thm:bellman}. Algorithm~\ref{alg:edp} shows this calculation: it performs sweeps of the whole state space until all values converge within some error bound $\epsilon$. The main loop updates the estimate of \textsc{edp} for each state using the Bellman Equation~\ref{eq:bellman} in the main paper.


\begin{algorithm}
\caption{Calculate EDP}
\begin{algorithmic}
\Procedure{EDP}{policy $\pi_1$, policy $\pi_2$, state space $S$, action space $A$, transition function $T$, desired accuracy $\epsilon$}
\State $EDP \gets \{0 | s \in S\}$
\State $err \gets \infty$
\While{$err > \epsilon$}
\State $err \gets 0$
\State $old \gets EDP$
\For{$s \in S$}
\State $A' \gets \{a\in A | \pi_1(s,a) > 0\}$
\State $EDP[s] \gets [1 - \sum\limits_{a\in A'}\pi_2(s,a)]$
\State $EDP[s] \gets EDP[s] + \sum\limits_{a \in A'}[\pi_2(s, a)] * \sum\limits_{s'\in S} T(s, a, s') * [1 + old[s']]$
\State $err \gets \max{(err, |EDP[s] - old[s]|)}$
\EndFor
\EndWhile
\EndProcedure
\end{algorithmic}
\label{alg:edp}
\end{algorithm}


\section{Experimental Setup}
\label{app:exp}

In this section we discuss additional details about the algorithms we tested and the experimental setup to allow reproducibility. In particular, we discuss further details about the \emph{BL:Cost+Prob} querying strategy and describe in detail how instances of the Tool Fetching Domain were generated.

\subsection{BL:Cost+Prob}
The goal of this method is to use a heuristic to determine what the ego agent should query about after we have determined that a query could be beneficial.
We first start by solving a simpler problem: determining what to query when the probability distribution over goals is uniform and every query has a uniform cost. Previous work would query about half the relevant goals, or in other words perform a binary search over the goals \cite{mirskypenny}. However reasoning more thoroughly regarding which goals to ask about can give even better performance. Consider the pairs of goals $G_B = \{(g_i,g_j) | t \in Z_B(g_i, g_j)\}$ where $t$ is the current time and $g_i, g_j \in G_B$ is a set of all $N$ possible goals that might still be the true goal of the worker. 
We can construct a binary vector $\vec{x}$ of length $N$ such that if $x_i$ is the $i$-th value in that vector, then station $g_i$ is included in the query if and only if $x_i=1$. A query that asks about a subset of goals $G'\subseteq G$ will disambiguate between the sets $G'$ and $G\setminus G'$. Therefore, to increase the information gained from the query, we want to split the pairs of goals $(g_i, g_j)$ as evenly as possible between $G'$ and $G\setminus G'$. We can write this maximization goal as follows:
\begin{equation}
    \max\sum\limits_{(g_i, g_j) \in G_B}(x_i \oplus x_j)
    \label{eq:o1}
\end{equation}
The term in the objective is 1 only when one $x$ is 0 and the other is 1. So for instance if the next action to reach goal $g_1$ must be the ontic action $o_1$, and the next action to reach goal $g_2$ must be the ontic action $o_2 \neq o_1$, then we need to eliminate one of these goals as a possibility before we know the next optimal ontic action.
This objective ensures that the query would ask about only one of these goals, and therefore about information that is the most relevant to which ontic action the ego agent should take next.

While the above is an improvement over randomly querying, it can give undesirable behavior when there is a non-uniform probability distribution over the worker's possible goals.
To reason about such circumstances, we modify the objective to weigh the goals by the ad-hoc agent's current belief state $P(g)$:
\begin{equation}
    \max\sum\limits_{(g_i, g_j) \in G_B}(P(g_i) + P(g_j))*(x_i \oplus x_j)
      \label{eq:o2}
\end{equation}
where $P(g_i)$ refers to the probability that the worker's goal is $g_i$. 
Intuitively, this new equation will prioritize goals that are more likely. This is desirable, since if a goal already has an extremely low probability, it likely won't be beneficial to query about it. We sum the two probabilities together since we want to consider these goals in the query if either of them is the true goal.

Finally, a complete model needs to incorporate different query cost models. Since we want to minimize the needed query cost as part of our objective, we add the negative cost of the query to our objective. Consequently, the final objective becomes
\begin{equation}
    \max\sum\limits_{(g_i,g_j) \in G_B}(x_i \oplus x_j)*(P(g_i) + P(g_j)) - \sum\limits_i(x_i*\textit{sc})
      \label{eq:o3}
\end{equation}
where \textit{sc} is the cost of including a station in a query. This objective now simultaneously attempts to maximize the probability that the ego agent will be able to act in the next timestep and minimize the cost of the query. Whenever the planning algorithm decides on querying, we solve Equation~\ref{eq:o3} using the Coin-OR Integer Program solver~\cite{john_forrest_2018_1317566}.

\subsection{Domain Instance Generation}
When generating a new domain instance, we first randomly select locations in the grid for the workstations and the toolboxes. Each location is chosen with uniform probability, where the only constraint is that two workstations/toolboxes cannot be in the same location. Then for each workstation, we assign its corresponding tool to one of the toolboxes with uniform probability, which means that different toolboxes might have a different number of tools in them. Finally, we choose a location for the worker and fetcher using uniform probability, and calculate the \textsc{edp} and \textsc{wcd} for each pair of stations/toolboxes as required for each of the algorithms. We serialize these results, and reuse them whenever running experiments. We generated 100 domain instances of 20x20 grids, with 50 workstations and 5 toolbox locations that were used in all experiments. Experiments were run using a custom OpenAI Gym Environment~\cite{openai_gym}.

\section{Additional Results}
\label{app:res}

\begin{figure}
    \centering
    \includegraphics[width=\linewidth]{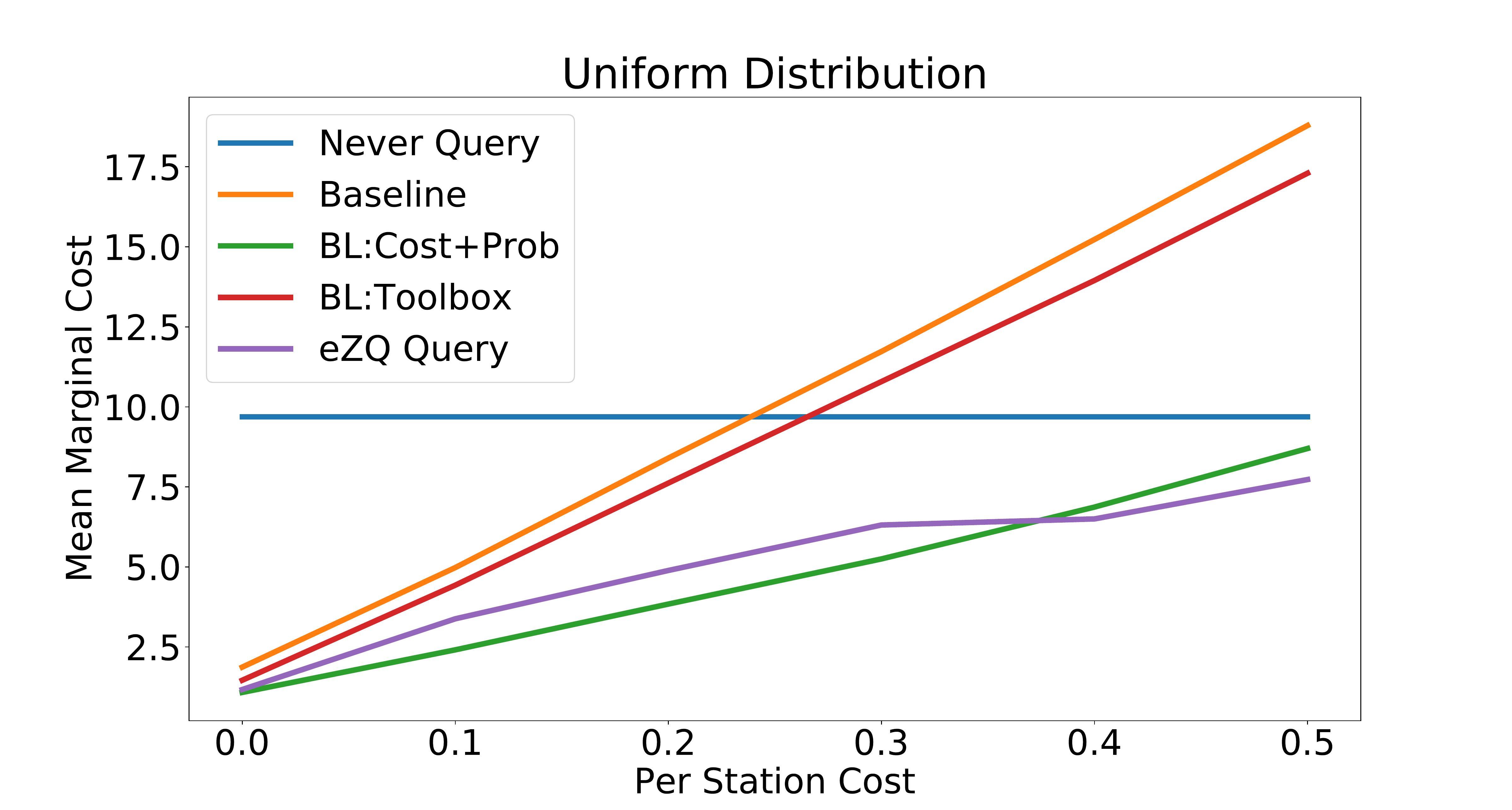}
    \caption{Per-station cost vs. marginal cost of domains with uniform distribution over the goals of the worker.}
    \label{fig:uniform}
\end{figure}

\begin{figure}
    \centering
    \includegraphics[width=\linewidth]{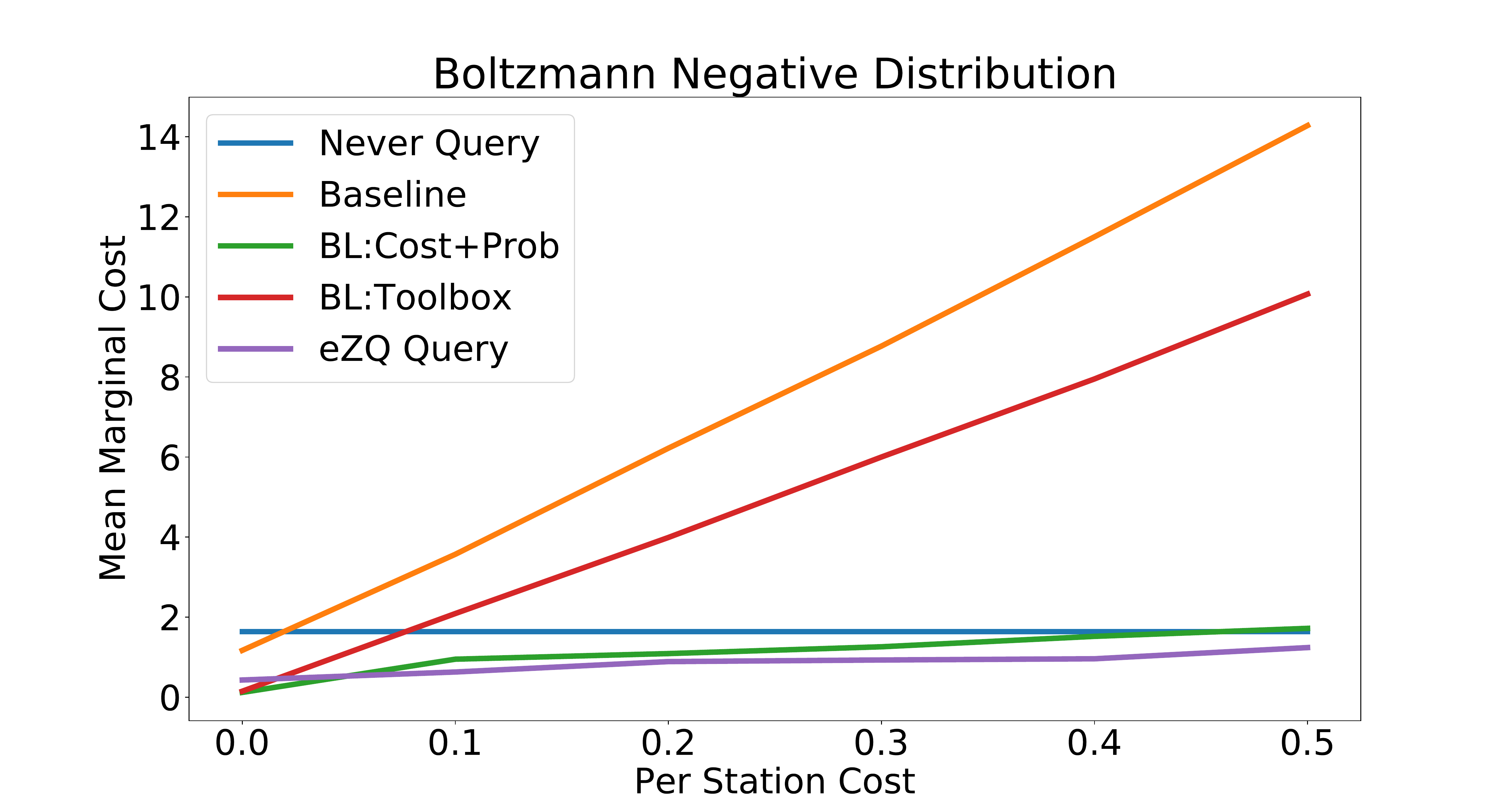}
    \caption{Per station cost vs marginal cost of domain with Boltzmann distribution of the negative of worker's distances to goals.}
    \label{fig:inv_dist}
\end{figure}

In our empirical results, we showed the performance of various querying strategies with the Boltzmann distribution over the worker's distances to goals. We present here the performance of the same strategies when using two additional distributions, a Uniform distribution and the Boltzmann distribution over the \emph{negative} of the worker's distances to goals. The Uniform distribution means that all goals are equally probable, and the Boltzmann distribution over the \emph{negative} of the worker's distances means that the worker is more likely to choose goals that are closer to it.
Figure~\ref{fig:uniform} shows the marginal cost with a uniform probability distribution. \emph{$eZ_Q$ Query} exhibits equivalent performance to other approaches when the per-station cost is $0$. As the per-station cost increases, the performance of the \emph{Baseline} methods dramatically worsen and none except \emph{BL:Cost+Prob} outperforms \emph{Never Query} when the per-station cost is large enough. By contrast, \emph{$eZ_Q$ Query} continues to perform well and is robust to the varying query costs.
Figure~\ref{fig:inv_dist} shows the average marginal cost with the Boltzmann distribution over the negative of the worker's distances to goals as the probability distribution. While the potential benefit from querying decreases under this distribution of potential goals, \emph{$eZ_Q$ Query} still outperforms all other querying strategies. \emph{BL:Toolbox} and \emph{BL:Cost+Prob} mildly outperform \emph{$eZ_Q$ Query} when the per-station cost is 0. However, \emph{$eZ_Q$ Query} is the only one to consistently outperform \emph{Never Query} when the per-station cost increases. \emph{Never Query} in this scenario performs extremely well since when the worker chooses a goal that's close to its current location, it reveals its true goal through its actions rather quickly. This dramatically reduces the overall need for querying in the first place. However, \emph{$eZ_Q$ Query} is still able to adapt and obtain performance that is consistently better than \emph{Never Query}.

With two exceptions, \emph{$eZ_Q$ Query} performs significantly better than all other approaches with a p value $<0.05$. The first exception is \emph{BL:Cost+Prob} with a uniform distribution over goals and a per-station cost $\leq 0.4$. The second exception is \emph{BL:Cost+Prob} with a Boltzmann distribution over the negative distances to goals and per station cost $\leq 0.2$.  We conjecture that our method fails to outperform \emph{BL:Cost+Prob} under these conditions because there was less information for our method to leverage. 
 For instance, when the distribution over goals is uniform and the per-station cost is 0, asking ``is your goal one of stations 1, 2 or 3?" is effectively the same query as ``is your goal one of stations 1, 2, 3 or 4?". This property makes it more difficult for our genetic algorithm to optimize, and can lead to worse performance.
\end{document}